\definecolor{RED}{rgb}{1,0,0}
\definecolor{ORANGE}{rgb}{1,0.5,0}
\definecolor{BLUE}{rgb}{0,0,1}
\newcommand{\xx}{1}
\newcommand{\yy}{1}
\newcommand{\sag}[2]{\tikz{\node[shape=circle,draw,inner sep=1pt,minimum width = 0.6cm, fill=#1]{$Y_{#2}$};}}
\newcommand{\specialcell}[2][c]{\begin{tabular}[#1]{@{}c@{}}#2\end{tabular}}
\newtheorem{lemma}{Lemma}
\begin{document}

\begin{frontmatter}



\title{Global Sensitivity Analysis of Uncertain Parameters in Bayesian Networks}


\author{Rafael Ballester-Ripoll}
\author{Manuele Leonelli}

\affiliation{organization={School of Science and Technology, IE University},
            city={Madrid},
            country={Spain}}

\begin{abstract}
Traditionally, the sensitivity analysis of a Bayesian network studies the impact of individually modifying the entries of its conditional probability tables in a one-at-a-time (OAT) fashion. 
However, this approach fails to give a comprehensive account of each inputs' relevance, since simultaneous perturbations in two or more parameters often entail higher-order effects that cannot be captured by an OAT analysis. We propose to conduct global variance-based sensitivity analysis instead, whereby $n$ parameters are viewed as uncertain at once and their importance is assessed jointly. Our method works by encoding the uncertainties as $n$ additional variables of the network. To prevent the curse of dimensionality while adding these dimensions, we use low-rank tensor decomposition to break down the new potentials into smaller factors. Last, we apply the method of Sobol to the resulting network to obtain $n$ global sensitivity indices. Using a benchmark array of both expert-elicited and learned Bayesian networks, we demonstrate that the Sobol indices can significantly differ from the OAT indices, thus revealing the true influence of uncertain parameters and their interactions.
\end{abstract}



\begin{keyword}
 Bayesian networks \sep sensitivity analysis \sep Sobol indices  \sep tensor networks \sep uncertainty quantification



\end{keyword}

\end{frontmatter}


\section{Introduction}
\label{sec:introduction}

Amongst the many machine learning methods now available to practitioners, Bayesian networks (BNs) are undoubtedly one of the most widespread due to their interpretability, ease of modelling, and wide array of software available \citep{Koller2009,pearl1988probabilistic}. BNs consist of a directed acyclic graph expressing the dependence relationship between the variables of interest, and one-dimensional conditional probability distributions from which various inferential queries can be answered. There are now uncountable applications of such models in a wide array of domains \citep[e.g.][]{Bielza2014a,Drury2017,hosseini2020bayesian,kabir2019applications,marcot2019advances,Mclachlan2020}. One of the main strengths of BNs is that they cannot only be learned from data, but they can also be expert-elicited, both in the structure of its underlying graph and its associated probabilities \citep{barons2022balancing}.

A fundamental but sometimes overlooked aspect of any real-world BN analysis is the robustness of outputs of interest w.r.t. misspecification of the model probabilities. In the field of BNs, this type of investigation is usually referred to as \textit{parametric sensitivity analysis} \citep{Rohmer2020}, and its use  has recently become more popular \citep[e.g.][]{giles2023solving,jindal2022bayesian,ma2022bayesian,wang2023chinese}.

The functional relationship between outputs and the BN model probabilities has been extensively studied in the literature \citep{Castillo1997,Coupe2002,laskey1995sensitivity,rohmer2020sensitivity,van2007sensitivity}. However, as \citet{Rohmer2020} has noted, most approaches that have been studied fall under the umbrella of one-at-a-time (OAT) sensitivity analysis or one-way sensitivity analysis. OAT investigates the effect of local variations to one parameter only while keeping the rest fixed, thus providing only a partial picture of the system's robustness. Attempts to vary more than one parameter at a time, the so-called \textit{multi-way} sensitivity analysis approach \citep{Bolt2014,bolt2017balanced,Kjaerulff2000,Leonelli2017}, quickly become computationally infeasible for most real-world BNs \citep{Chan2004,uai2008,Leonelli2022}, although some recent technical advances that map BNs to Markov chains or tensor networks (TNs) have been exploited to speed up computations \citep{ballester2023yodo,salmani2021fine,salmani2023automatically,salmani2023finding}.

Outside BNs, the most common approach is the so-called global sensitivity analysis \citep{Saltelli2000,SRACCGST:08}, which provides a ‘‘global’’, instead of local, representation of how different factors jointly influence some function of the model’s output. As noticed by \citet{saltelli2021sensitivity}, global sensitivity is en route to becoming an integral part of mathematical modelling. Its tremendous potential benefits are yet to be fully realized, both for advancing data-driven modelling of human and natural systems and in supporting decision-making \citep{Razavi2021}. Furthermore, the latest trends in machine learning focus on models whose workings can be interpreted and explained (explainable artificial intelligence, or XAI), and sensitivity analysis plays a central role in this task \citep{linardatos2020explainable,van2022comparison}.

To our knowledge, there have been only three recent attempts to implement global sensitivity methods for BNs. \citet{LM:17} and \citet{zio2022bayesian} implemented Monte Carlo-based estimation routines for global sensitivity indices. However, their approach suffers from two significant drawbacks: it is infeasible for moderate-size networks since it is based on a very complex Monte Carlo simulation; their implementation is not freely available, and therefore, other practitioners cannot utilize the routines in their work. \citet{ballester2022computing} implemented global sensitivity analysis techniques in BNs and developed exact methods for their computation based on TNs. Still, all these methods address the so-called sensitivity to evidence problem \citep[e.g.][]{gomez2014sensitivity}, which identifies the most informative evidence that could be observed to improve the assessment of the output of interest. 

 In this paper, we develop algorithms for the computation of global sensitivity indexes for parametric (as opposed to evidential) sensitivity analyses in BNs. To this end, we leverage the duality between probabilistic graphical models and TNs \citep{RS:18}. By first representing BNs as Markov random fields (MRFs) and then as TNs, the approach utilizes the duality between these representations to facilitate complex computations. This transformation allows for the application of sophisticated tensor operations, which are critical for handling the high-dimensional structures inherent in BNs. Specifically, the Sobol index computation is achieved through element-wise product operations followed by tensor contraction, enabling a more comprehensive sensitivity analysis that accounts for interactions between parameters.

More specifically, the proposed method augments tensors in a novel way, embedding uncertainties directly into the tensor structures. This augmentation, while potentially leading to computational infeasibility due to the increased dimensionality, is managed by employing state-of-the-art tensor contraction and manipulation libraries. These tools, particularly tensor train decompositions \citep{Oseledets:11}, allow for efficient processing by breaking down tensors into smaller, more manageable components. This decomposition not only mitigates the computational burden but also preserves the integrity of the sensitivity analysis, making it feasible to handle large and complex networks.

 In short, we develop a novel algorithmic solution for the interpretability of these machine learning models, providing a significant advance in knowledge and an impactful tool for applied analyses in many areas of science. This is particularly critical given that often ad-hoc solutions are implemented which may lead to false conclusions as observed by \citet{Do2020} and \citet{Saltelli2019}. In this paper, we showcase the method (and the insights it provides in practice) using a real-world application on the consequences of the COVID-19 pandemic at the European level.

\section{Background}

\subsection{Bayesian Networks}

Consider a categorical random vector of interest $\boldsymbol{Y}=(Y_1,\dots,Y_p)$. A BN gives a graphical representation of the relationship between the variables $\boldsymbol{Y}$ using a directed acyclic graph and a factorization of the overall probability distribution $P(\boldsymbol{Y}=\boldsymbol{y})$, where $\bm{y}$ is an element of the sample space $\mathbb{Y}=\times_{i\in\{1,\dots,p\}}\mathbb{Y}_i$, in terms of simpler conditional distributions $P(Y_i=y_i | \boldsymbol{Y}_{\Pi_i}=\boldsymbol{y}_{\Pi_i})$, where $\boldsymbol{Y}_{\Pi_i}$ denotes the parents of $Y_i$. More formally, the overall factorization of the probability distribution induced by the BN can be written as:
\[
P(\bm{Y}=\bm{y})=\prod_{i=1}^p P(Y_i=y_i | \boldsymbol{Y}_{\Pi_i}=\boldsymbol{y}_{\Pi_i})
\]

Figure \ref{fig:bn1} gives an example of a simple BN. The nodes are the variables of the problems and the arcs represent direct dependence relationships. The lack of an arc formally represents conditional independence as formalized by the \textit{d-separation criterion} of \cite{Pearl2009}. BNs, giving a graphical representation of the structure of the mathematical model, are particularly suited for applied analyses since they can be interpreted and understood having little, if no, previous mathematical knowledge.

\begin{figure}
\centering
\begin{tikzpicture}
\node (1) at (0*\xx,1*\yy){\sag{white}{1}};
\node (3) at (1.5*\xx,0*\yy){\sag{white}{3}};
\node (2) at (3*\xx,1*\yy){\sag{white}{2}};
\draw[->, line width = 1.1pt] (1) -- (3);
\draw[->, line width = 1.1pt] (2) -- (3);
\end{tikzpicture}
\;\;\;\;\;\;\;\;
\begin{tabular}{cc}
\toprule
\multicolumn{2}{c}{$Y_1$}\\
    no & yes  \\
     \midrule
     0.60&0.40 \\
     \bottomrule 
     \multicolumn{2}{c}{}\\
     \toprule
      \multicolumn{2}{c}{$Y_2$}\\
    no & yes  \\
     \midrule
     0.70&0.30 \\
     \bottomrule 
\end{tabular}
\;\;\;\;\;\;\;\;
\begin{tabular}{cc|cc}
\toprule
&&\multicolumn{2}{c}{$Y_3$}\\
$Y_1$&$Y_2$  &   no& yes  \\
     \midrule
    no & no & 0.70 & 0.30  \\
    no &yes& 0.60 & 0.40 \\
     yes & no& 0.60 & 0.40 \\
     yes& yes & 0.05 & 0.95\\
     \bottomrule
\end{tabular}

\caption{An example of a DAG over three binary random variables $Y_1,Y_2,Y_3$ with the associated probability specifications $\boldsymbol{\theta}^0$.
\label{fig:bn1}}
\end{figure}
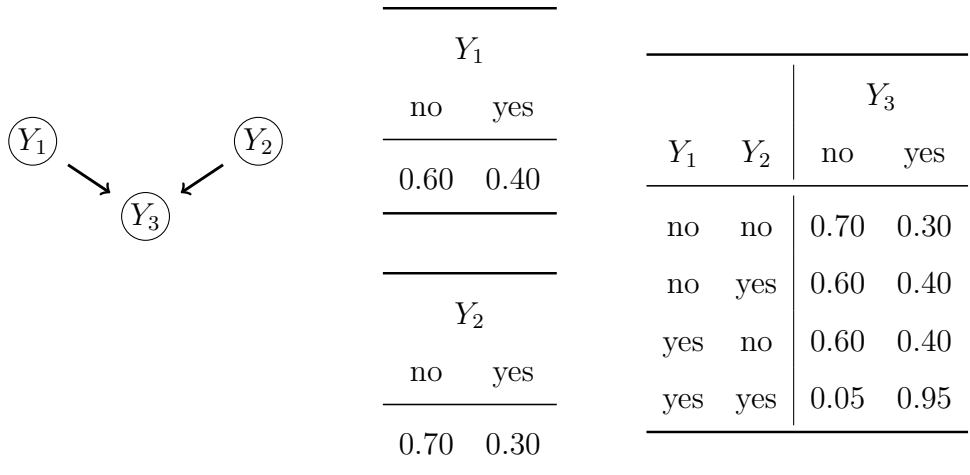

The conditional probabilities $P(Y_i=y_i | \boldsymbol{Y}_{\Pi_i}=\boldsymbol{y}_{\Pi_i})$ defining the BN model are usually denoted as $\boldsymbol{\theta}$ and, for categorical variables, form the so-called conditional probability tables (CPTs). The CPTs are either learned from data using machine learning algorithms \citep{scutari2019learns} or expert-elicited through probability elicitation exercises \citep{barons2022balancing}.  No matter the method used, we assume a value for the probabilities $\boldsymbol{\theta}$ has been chosen, which we refer to as the \textit{original value} and denote as $\boldsymbol{\theta}^0$. For our simple example, these are reported on the right-hand side of Figure \ref{fig:bn1}. Because of imprecisions during elicitation or low quality of observed data, it is imperative to check the effect of potential perturbations of $\boldsymbol{\theta}^0$ on the output of the BN model. This assessment is carried out via \textit{sensitivity analysis}.

\subsection{Parametric Sensitivity Analysis in BNs}
\label{sec:parametric_sensitivity_analysis}

We henceforth consider the conditional probabilities $\boldsymbol{\theta}$ defining the model as our objects of study. Let $Y_O$ be an output variable of interest 
and consider the probability of interest $f_{O}(\bm{\theta}) = P(Y_O=y_O)$ for some level $y_O$.
Seen as a function of a chosen vector of conditional probabilities $\boldsymbol{\theta}$, $f$ is called a \textit{sensitivity function} \citep{van2007sensitivity}.

Consider the BN in Figure \ref{fig:bn1} and suppose $Y_1$ and $Y_2$ describe two medical tests that could be either positive or negative, while $Y_3$ is the presence of a specific disease. Suppose our output of interest is $P(Y_3 = \textnormal{yes})$, the probability that an individual has the disease. In Figure \ref{fig:sens1}, we report the one-way sensitivity functions associated with the probability that a test is positive, constructed using the \texttt{bnmonitor} R package \citep{leonelli2023}. It can be clearly seen that individually, the two parameters have a very limited effect on the output probability, which changes very slowly as the parameter of interest is varied.

\begin{figure}
    \centering
    \includegraphics[scale=0.6]{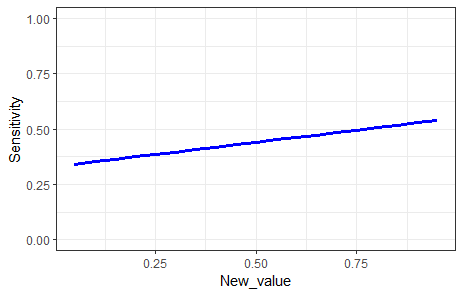}\;\;
    \includegraphics[scale=0.6]{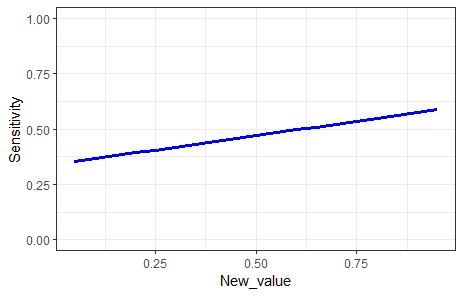}
    \caption{Sensitivity functions for the output probability $P(Y_3 = \textnormal{yes})$ as a function of $P(Y_1=\textnormal{yes})$ (left) and $P(Y_2=\textnormal{yes})$ (right).}
    \label{fig:sens1}
\end{figure}

A standard way to summarize one-way sensitivity functions is by considering the gradient at the original value of the considered parameter $\theta_i^0$. More formally, the $i$-th \textit{sensitivity value} is usually reported and formally defined as $|f'_{O}(\theta_i^0)|$. For the two sensitivity functions in Figure \ref{fig:sens1}, these values can be either computed manually using standard probability derivations or using an exact method such as YODO~\citep{ballester2022yodo}. They are very small in this case: 0.015 and 0.060, respectively.

\subsection{Proportional Covariation}

After the variation of  an input $\theta_i$ from its original value $\theta_i^0$ the sum-to-one condition of probabilities does not hold anymore. For this reason, parameters from the same conditional distribution are \textit{covaried} to respect this constraint. When variables are binary, this is automatic since one parameter must be equal to one minus the other. When variables are categorical with more than two levels, there are multiple methods to covary parameters  \citep{renooij2014co}. \textit{Proportional covariation} \citep{laskey1995sensitivity} is the gold-standard method since it is motivated by  a wide array of theoretical properties \citep{Chan2005,Leonelli2017,Leonelli2022}. When a parameter $\theta_j$ from the same conditional distribution of the varied parameter $\theta_i$ is proportionally covaried it equals:
\[
\theta_j(\theta_i) = \frac{1-\theta_i}{1-\theta^0_i}\theta^0_j.
\]
Under the assumption of proportional covariation, \citet{Castillo1997} and \citet{coupe2000sensitivity} proved that the sensitivity function equals the ratio of two linear functions:
\[
f(\theta_i)=\frac{c_0+c_i\theta_i}{d_0+d_i\theta_i},
\]
where $c_0,c_i,d_0,d_i\in\mathbb{R}$.

\subsection{Motivating Example: Limitation of OAT Methods}
\label{sec:limit}

\begin{figure}
    \centering
    \includegraphics[scale=0.7]{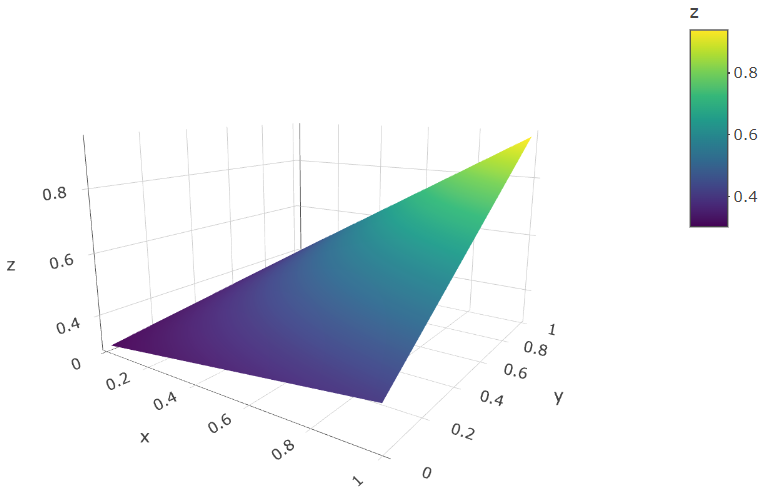}
    \caption{Two-way sensitivity function for $P(Y_3 = \textnormal{yes})$ as a function of $P(Y_1=\textnormal{yes})$ and $P(Y_2=\textnormal{yes})$. }
    \label{fig:enter-label}
\end{figure}

It is well known that OAT sensitivity methods only provide a partial and sometimes wrong picture of the influence of the input parameters \citep{Saltelli2019}. Our simple BN example in Figure \ref{fig:bn1} illustrates this. Consider now the probability of having the disease $P(Y_3=\textnormal{yes})$ as a function of the two tests being positive $P(Y_1=\textnormal{yes})$ and $P(Y_2=\textnormal{yes})$. Figure \ref{fig:enter-label} reports this probability as a function of both parameters. It can be clearly seen there is a steep increase in the probability of having the disease when the probability of both tests being positive increases. On the other hand, the two parameters individually have a minor impact on the output probability, as already noticed in Figure \ref{fig:sens1}. To draw a parallelism, this is similar to a two-way ANOVA when there is a strong interaction term between the two factors. Of course, this example is trivial and the joint effect of both parameters could have been already noticed by simply looking at the conditional probabilities in Figure \ref{fig:bn1}. Still, it illustrates the limitation of OAT analyses in BNs even in the simplest scenario.

In the context of BNs, multi-way sensitivity analyses where multiple parameters are varied simultaneosly have been studied. However, their computation and interpretation becomes challenging. We can notice that (i) sensitivity
functions cannot be visualized in high dimensions; (ii) the number of groups of parameters that could be varied grows exponentially; (iii) most critically, the associated measures are challenging to interpret, similar to higher-order interactions in standard statistical
models \citep[see e.g.][]{hayes2012cautions}.
For these reasons, we argue that a global approach is better suited to thoroughly investigate the effect of the parameters of a BN model.

\subsection{Sobol Indices}

Sobol indices represent one of the most widely used frameworks for global sensitivity of a function $f: \Omega\subseteq \mathbb{R}^n\rightarrow \mathbb{R}$ taking values in a $n$-dimensional rectangle $\times_{i\in\{1,\dots,k\}}\Omega_i$ \citep{Sobol:90}. Although there are many different types of indices, for our purposes we focus on two of the most common ones, namely the \emph{variance component} and the \emph{total index}.

Henceforth, consider a $n$-dimensional vector $\bm{\theta}=(\theta_1,\dots,\theta_n)$ of input parameters, whose uncertainty needs to be assessed. The \emph{variance component} associated to $\theta_i$ is 
	\begin{equation}
		S_i := \frac{\mathrm{Var}_i \left[ \textnormal{E}_{\setminus i}[f] \right] }{\mathrm{Var}[f]},
		\label{eq:variance_component}
	\end{equation}
	where $ \textnormal{E}_{\setminus i}$ is the expectation with respect to $\pmb{\theta}_{\setminus i}$ and $\mathrm{Var}_i$ is the variance with respect to $\theta_i$. The variance component quantifies how the average model output changes as $\theta_i$ is varied. In other words, it is an additive measure over $\theta_i$.
	
The \emph{total index} of the parameter $\theta_i$ swaps the roles of the expectation and the variance in Eq.~(\ref{eq:variance_component}):
\[
		S^T_i := \frac{ \textnormal{E}_{\setminus i} \left[ \mathrm{Var}_i[f] \right] }{\mathrm{Var}[f]}.
\]
The total index quantifies the average variability in the model output due to changes in $\theta_i$ only. Compared to the variance component, the total index is an overall measure, which includes additive effects but also interaction terms. Thus, $S_i \le S^T_i$. The two indices are equal if and only if  $f(\pmb{\theta}) = g(\theta_i) + h(\pmb{\theta}_{\setminus i})$, that is if $\theta_i$ can be separated from $f$ in an additive fashion. Among other uses, total indices are a popular tool for model simplification: whenever a model input $\theta_i$ has a small $S^T_i$, it can be safely fixed to any reasonable constant, thus decreasing the degrees of freedom.

\subsection{Tensor Train Decomposition}
\label{sec:tt_decomposition}

The tensor train low-rank decomposition~\citep{Oseledets:11}, or TT for short, factorizes a high-dimensional tensor into a linear network of tensors that are at most three-dimensional. If $\mathcal{T}$ is a tensor of shape $I_1 \times \dots \times I_N$, the TT format approximates it as
\begin{equation}
    \label{eq:tt}
    \mathcal{T}(i_1, \dots, i_N) \approx G_1(i_1) \cdots G_N(i_N)
\end{equation}
where $G_1(i_1)$ is a row vector, $G_2(i_2), \dots, G_{N-1}(i_{N-1})$ are matrices, and $G_N(i_N)$ is a column vector for any values $1 \le i_1 \le I_1, \dots, 1 \le i_N \le I_N$. 

We can use this format to break up CPTs into smaller tensors, thus decreasing their number of degrees of freedom. For example, a node $A$ with three parents $B, C, D$ will have a CPT $\Phi$ of 4 dimensions. Viewed as a tensor, it can be written in the TT format as
\begin{equation}
    \label{eq:tt_example}
    \Phi(y_A, y_B, y_C, y_D) \approx \sum_{r_1, r_2, r_3, r_4} G_1(y_A, r_1) G_2(r_1, y_B, r_2) G_3(r_2, y_C, r_3) G_4(r_3, y_D)
\end{equation}
where the $r_1, \dots, r_{N-1}$ are known as bond dimensions (or \emph{TT ranks}) and play the role of latent variables. The model reduction induced by the TT format comes at the expense of a variable compression error $\epsilon$. In any case, that error may be reduced by increasing $\pmb{r}$ as necessary, and $\epsilon$ can be even brought to zero when the tensor is exactly low-rank. 

Note that directly compressing a CPT as in Eq.~(\ref{eq:tt_example}) is rarely useful. A more common use of low-rank decompositions in this context is to apply truncated singular value decomposition (SVD) to break up intermediate factors during tensor contraction~\citep{graychan24} (which is equivalent to variable elimination in exact BN inference). In this paper, we will use the TT format to handle ``enriched'' CPTs that incorporate additional degrees of freedom due to parametric uncertainties in the BN. In practice, we will use the following variant of Eq.~(\ref{eq:tt}):
\begin{equation}
    \label{eq:ett}
    \mathcal{T}(i_1, \dots, i_N) \approx (G_1 \times_2 \mathbf{U}_1)(i_1) \cdots (G_N \times_2 \mathbf{U}_N)(i_N)
\end{equation}
where the $\times_2$ is the \emph{tensor-times-matrix} product along the second mode~\citep{KB:09} and the $\mathbf{U}_1, \dots, \mathbf{U}_N$ are learned matrices that provide a convenient extra level of compression when dimensions represent continuous variables (as is the case for our uncertainties). Although Eq. (\ref{eq:ett}) has sometimes been called \emph{extended tensor train} in the literature~\citep{SU:14}, we will simply refer to it as \emph{TT} in this paper.

\section{The Algorithm}
\label{sec:algorithm}

Our goal is to apply Sobol's method to address the limitations of OAT parametric sensitivity analyses in BNs illustrated in Section \ref{sec:limit}. For simplicity, we will refer to all probability tables in the model as \emph{CPTs} even when a node has no parents. Our algorithm takes a BN and a list of uncertain CPT entries $\bm\theta$ and produces a Sobol index for each $\theta_i$. It consists of four steps:

\begin{enumerate}
    \item We moralize the BN and interpret it as a tensor network.
    \item For each CPT that includes uncertain entries, we encode each uncertainty as an additional new parent (i.e., a new dimension of the CPT).
    \item To avoid exponential blow-up of the augmented CPTs, we compactly encode them using a low-rank TT decomposition. Within the broader network, we seamlessly replace the original affected CPTs for their TT versions. For computational efficiency reasons, we execute this step at the same time as the previous one.
    \item Using the method of Sobol, we extract sensitivity indices of the resulting TN for the set of added new variables. We use the algorithm of~\cite{ballester2022computing} with exact inference for this step.
\end{enumerate}

Next, we detail each step in depth.

\subsection{Moralization and Conversion to a Tensor Network}

Graph moralization is a standard step for casting a BN into a Markov random field (MRF) \citep[e.g.][]{Koller2009}. It involves two steps:
\begin{enumerate}
    \item Removing all arrow directions.
    \item Connecting (\emph{marrying}) each pair of parents together, so that each CPT for a node with $k$ parents becomes a clique of $(k+1)$ nodes with an associated $(k+1)$-dimensional \emph{potential}.
\end{enumerate}

Furthermore, in this paper we represent MRFs as tensor networks (TNs). The TN formulation is equivalent to that of MRFs~\citep{RS:18}, since each one is the dual graph of the other:

\begin{itemize}
    \item The nodes of an MRF become edges (or hyperedges, if the node has three or more neighbors) in the TN.
    \item The edges (network cliques are counted as hyperedges) of an MRF become nodes in the TN. The \emph{tensors} in the TN are simply the MRF potentials.
\end{itemize}

See Figure~\ref{fig:bn_mrf_tn1} for an example of moralization and conversion to TN by means of the dual graph.

\begin{figure}\centering
	\begin{subfigure}[b]{0.4\columnwidth}
		\includegraphics[width=1\columnwidth]{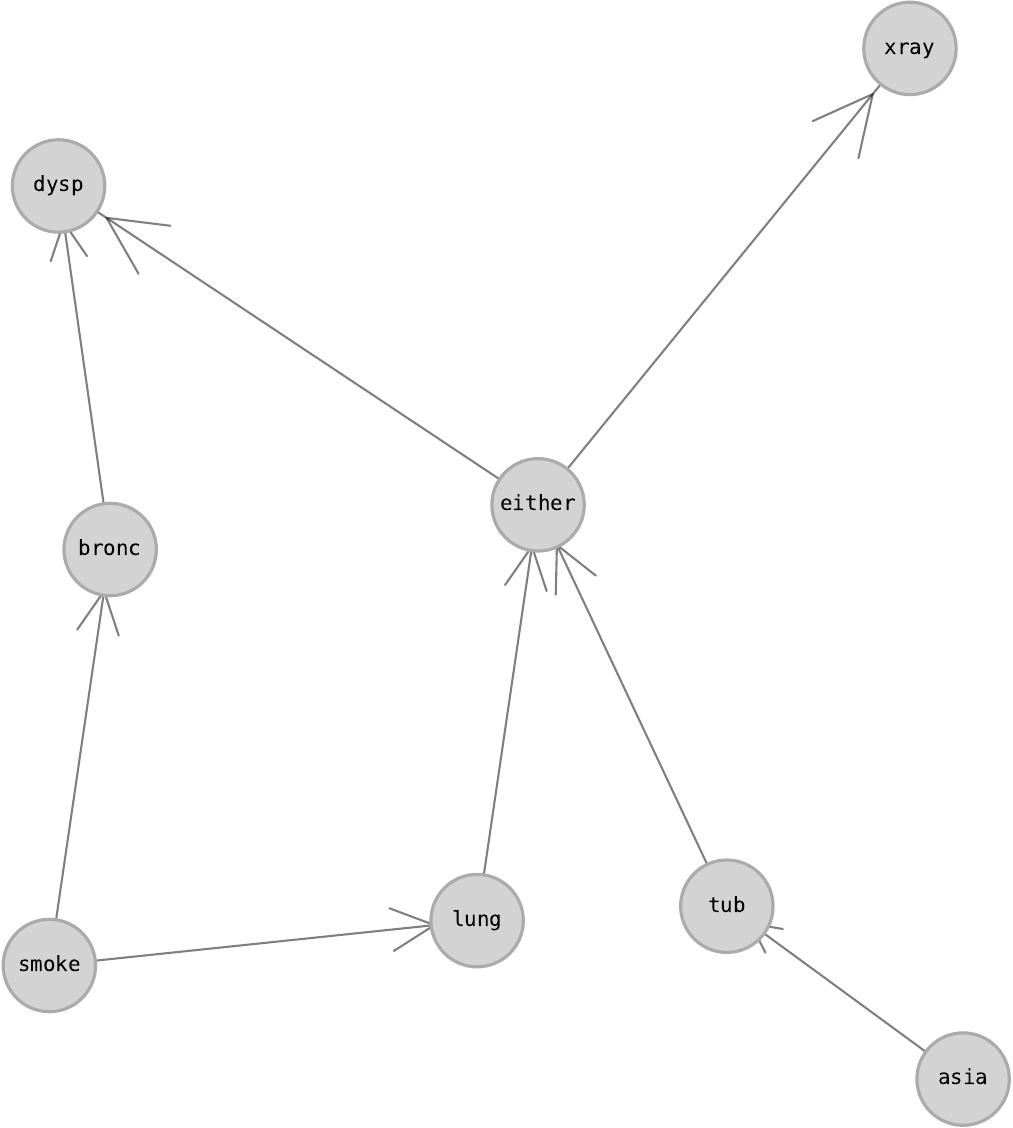}
		\caption{An input Bayesian network $\mathcal{B}$.}
	\end{subfigure}%
	\begin{subfigure}[b]{0.4\columnwidth}
		\includegraphics[width=1\columnwidth]{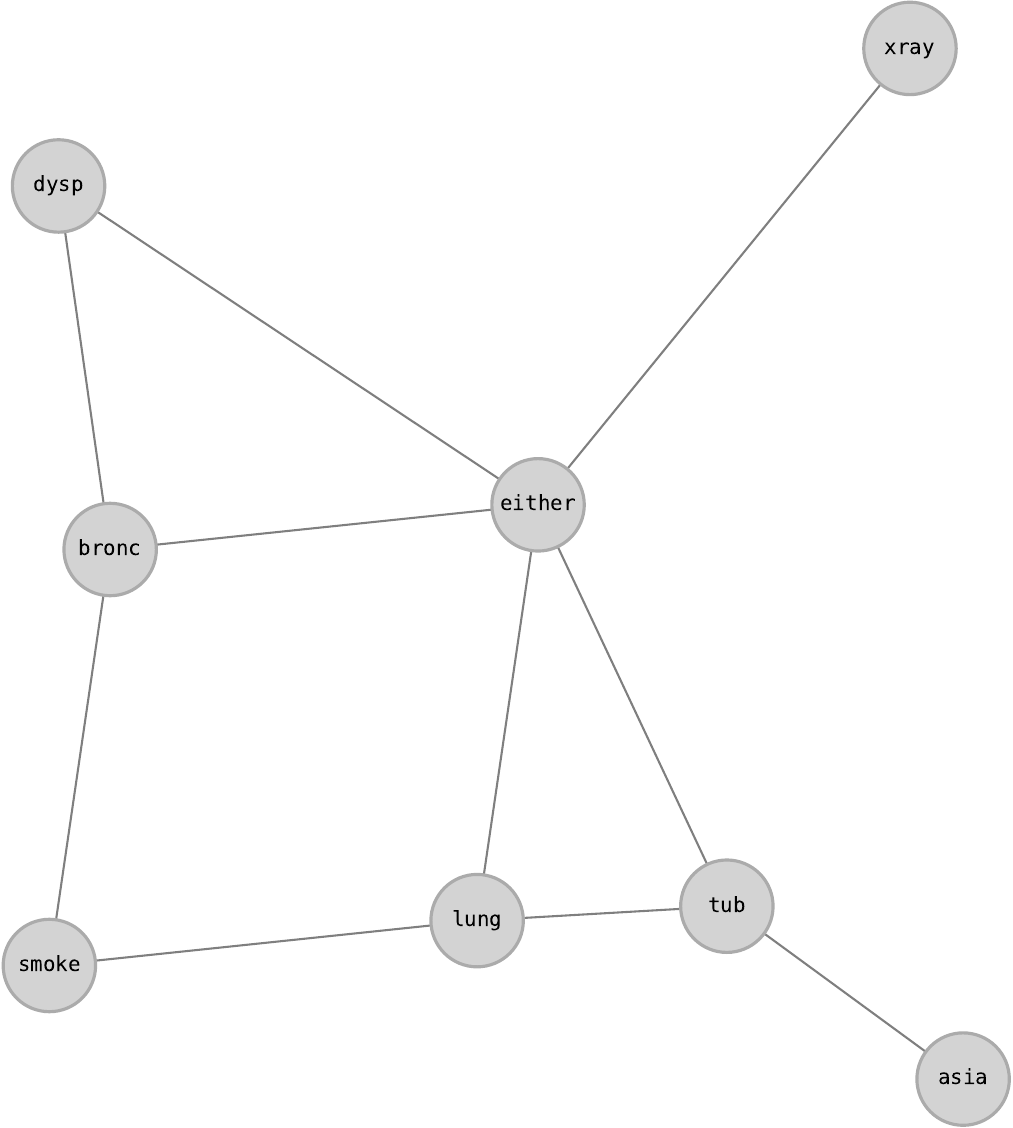}
		\caption{MRF $\mathcal{M}$ that results from moralizing $\mathcal{B}$.}
	\end{subfigure}
	\hfil
	\begin{subfigure}[b]{0.6\columnwidth}
		\includegraphics[width=1\columnwidth]{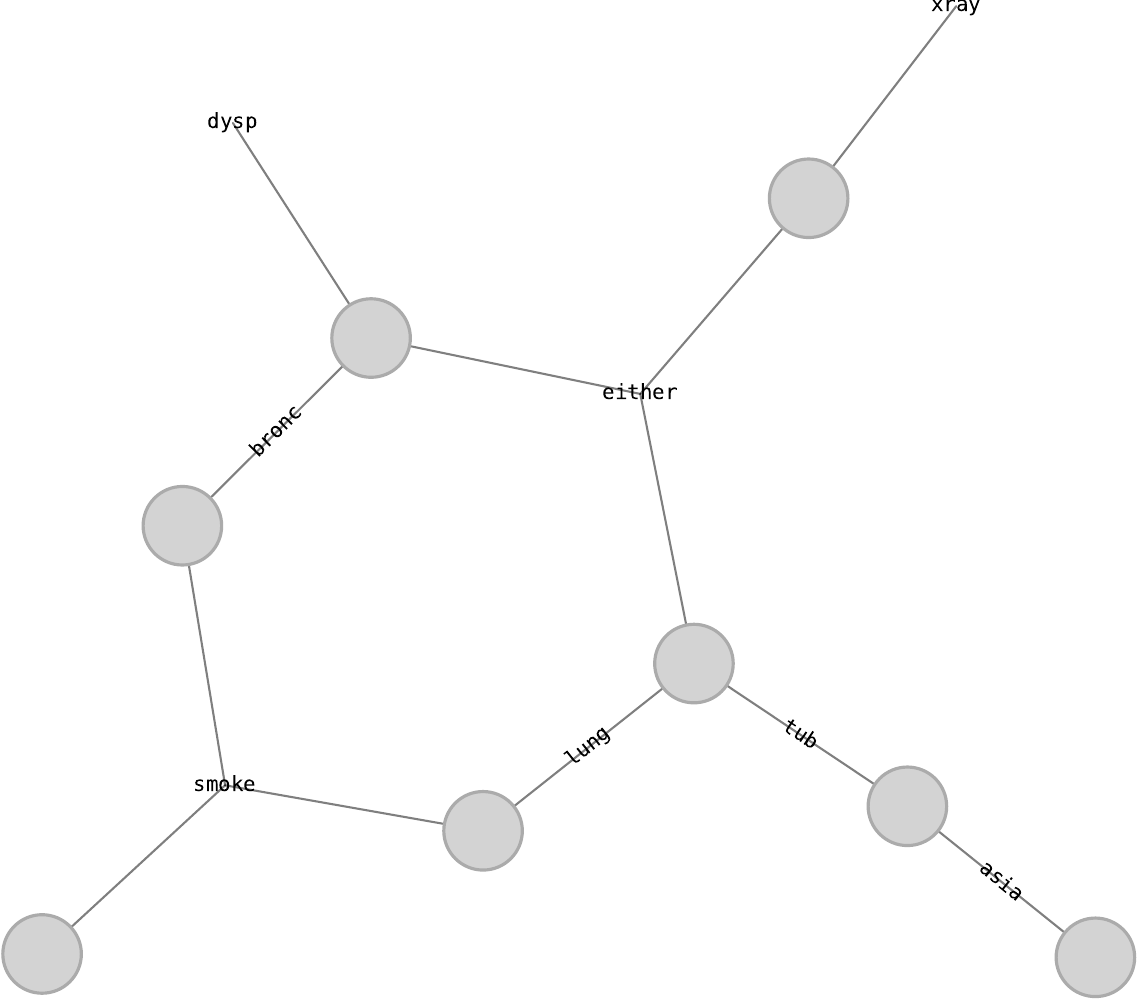}
		\caption{Tensor network representation (dual of (b)).}
	\end{subfigure}
	\caption{The \textsc{asia} network as a Bayesian network (a), Markov random field (b), and tensor network (c). Note that, if the cliques of (b) are regarded as hyperedges, then (c) is the dual graph of (b).}
	\label{fig:bn_mrf_tn1}
\end{figure}

The TN representation makes it easier to illustrate the ``surgical'' network manipulation operations we perform in later steps, since the low-rank decomposition we use (tensor train) is usually represented as a TN in the literature.

\subsection{Encoding Uncertainties}
\label{sec:encoding_uncertainties}

Let the CPT for variable $Y$ with parents $\boldsymbol{Y}_{\Pi}$ be denoted as $\Phi(Y=y, \boldsymbol{Y}_{\Pi}=\boldsymbol{y}_{\Pi})$. Let us assume we have $P$ uncertain entries in $\Phi$, with $y_1, \dots, y_P$ denoting the child and $\pmb{y}_1^{\mathbb{U}}, \dots, \pmb{y}_P^{\mathbb{U}}$ denoting the parent indices of those entries. As an example, if the uncertain entries in the CPT of a node $A$ are $\{P(A=1 | B=0, C=0), P(A=1 | B=1, C=1)$, then $y_1 = 1, y_2 = 1, \boldsymbol{y}_1^{\mathbb{U}} = (0, 0), \boldsymbol{y}_2^{\mathbb{U}} = (1, 1)$. Note that we do not support multiple uncertain entries for a single parent configuration, i.e. $\pmb{y}_a^{\mathbb{U}} \ne \pmb{y}_b^{\mathbb{U}}$ for all $a \ne b$. The reason is that the constraint $\sum_{y} \Phi(Y=y, \boldsymbol{Y}_{\Pi}=\boldsymbol{y}_{\Pi}) = 1$ imposes a relationship between these entries, making them correlated and hence unsuitable for a Sobol analysis.

Our goal is to build an ``augmented'' potential $\Phi'$ that includes new dimensions $x_1, \dots, x_P$ with $x_p \in [0, 1]$ for all $p$. Both $\Phi'$ and $\Phi$ are identical (and the $x_p$ dummy) everywhere except on the uncertain indices. On each uncertain entry $p$, we have that $\Phi'(\boldsymbol{x}, y_p, \boldsymbol{y}_p^{\mathbb{U}}) := x_p$, whereas entries of $\Phi'$ for other child values $Y \ne y_p$ are adjusted accordingly using proportional covariation~\citep{laskey1995sensitivity}. More compactly:
%
\[
\begin{aligned}
    \Phi'(\boldsymbol{X} = \boldsymbol{x}, Y = k, \boldsymbol{Y}_{\Pi} = \boldsymbol{y}_{\Pi}) := \\
    \begin{cases}\frac{1 - x_p}{1 - \Phi(y_p, \boldsymbol{y}_{\Pi})} \Phi(k, \boldsymbol{y}_{\Pi}) & \mbox{ if } \boldsymbol{y}_{\Pi} = \boldsymbol{y}_p^{\mathbb{U}} \mbox{ for some } p \in \{1, \dots, P\} \\
    \Phi(k, \boldsymbol{y}_{\Pi}) & \mbox{otherwise.}
    \end{cases}
\end{aligned}
\]

The new tensor $\Phi'$ has $P$ more dimensions than $\Phi$ and could become computationally intractable even at modest values of $P$. Fortunately, it can be succinctly represented in the TT format by virtue of the following lemma.

\begin{lemma}
    \label{lemma:rank_bound}
    The potential $\Phi'$ defined above has TT rank at most $\sqrt{|\Phi|} + P$, where $|\Phi|$ is the total number of elements of $\Phi$.
\end{lemma}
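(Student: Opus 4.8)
The plan is to write $\Phi'$ as a sum of $P+1$ tensors, each of which is visibly of very low TT rank, and then appeal to two elementary facts: that the TT rank is sub-additive under tensor addition (at every sequential unfolding, the rank of a sum of matrices is at most the sum of the ranks), and that \emph{any} $d$-mode tensor with $m$ entries has TT rank at most $\sqrt{m}$ (in every sequential unfolding the row-count times the column-count equals $m$, so the smaller of the two, which bounds the rank, is at most $\sqrt{m}$).

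\emph{Decomposition.} First I would note that, for a fixed value of $\boldsymbol{x}$, the tensor $\Phi'$ agrees with $\Phi$ everywhere except on the $P$ parent slices $\boldsymbol{Y}_{\Pi} = \boldsymbol{y}_p^{\mathbb{U}}$, and that on each such slice $\Phi'$ is an \emph{affine} function of the single coordinate $x_p$: the entry at child value $y_p$ is $x_p$ itself, and the remaining entries are $\frac{1-x_p}{1-\Phi(y_p,\boldsymbol{y}_p^{\mathbb{U}})}\,\Phi(\cdot,\boldsymbol{y}_p^{\mathbb{U}})$. Since the configurations $\boldsymbol{y}_1^{\mathbb{U}},\dots,\boldsymbol{y}_P^{\mathbb{U}}$ are pairwise distinct, no entry of $\Phi'$ depends on more than one $x_p$, so collecting terms gives
\[
\Phi'(\boldsymbol{x}, k, \boldsymbol{y}_{\Pi}) \;=\; \Phi^{\star}(k,\boldsymbol{y}_{\Pi}) \;+\; \sum_{p=1}^{P} x_p\,\beta_p(k)\,\mathbf{1}\{\boldsymbol{y}_{\Pi} = \boldsymbol{y}_p^{\mathbb{U}}\},
\]
where $\Phi^{\star}$ is the fixed tensor obtained from $\Phi$ by rescaling each uncertain slice by $1/(1-\Phi(y_p,\boldsymbol{y}_p^{\mathbb{U}}))$ and zeroing its entry at $(y_p,\boldsymbol{y}_p^{\mathbb{U}})$, and $\beta_p(k)=1$ if $k=y_p$ and $\beta_p(k)=-\Phi(k,\boldsymbol{y}_p^{\mathbb{U}})/(1-\Phi(y_p,\boldsymbol{y}_p^{\mathbb{U}}))$ otherwise. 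This identity is verified slice by slice directly against the definition of $\Phi'$ and of proportional covariation.

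\emph{Bounding each piece.} Each summand $x_p\,\beta_p(k)\,\mathbf{1}\{\boldsymbol{y}_{\Pi} = \boldsymbol{y}_p^{\mathbb{U}}\}$ is an elementary rank-one tensor: it is the outer product of the vector $(x_p)$ along the $x_p$ mode, the all-ones vector along every other $x_q$ mode, the vector $\beta_p$ along the $Y$ mode, and a standard basis vector along each parent mode; hence every unfolding of it has rank at most $1$. (If one prefers to regard the continuous coordinates as sampled on a grid, this is literally true of the sampled tensor.) The leftover term $\Phi^{\star}$ does not depend on $\boldsymbol{x}$ at all, so in any unfolding of it the $x$-modes merely replicate rows and columns and do not change the rank; the rank therefore equals that of the corresponding unfolding of $\Phi^{\star}$, which has the same shape as $\Phi$ and so is at most $\sqrt{|\Phi|}$ by the square-root bound.

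\emph{Assembling.} Combining the two bounds via sub-additivity, the bond dimension of $\Phi'$ at every cut is at most $\sqrt{|\Phi|}+P$, which is the claim; note that this argument does not depend on how the modes of $\Phi'$ are ordered. I expect the only delicate point to be the decomposition step: one must check that $\Phi'$ really is affine (not merely polynomial of low degree) in each $x_p$ and that the correction to $\Phi^{\star}$ splits into exactly $P$ rank-one pieces with pairwise disjoint supports — this is precisely where the hypothesis that each parent configuration carries at most one uncertain entry is used. Everything after that is routine linear algebra.
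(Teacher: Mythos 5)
Your proof is correct and follows essentially the same route as the paper's: both decompose $\Phi'$ into a base tensor of TT rank at most $\sqrt{|\Phi|}$ (bounded via the shapes of its unfolding matrices) plus $P$ rank-one corrections supported on the pairwise-disjoint uncertain parent slices, and conclude by subadditivity of TT rank under addition. The only difference is bookkeeping: the paper keeps $\Phi$ itself as the base and places the entire affine-in-$x_p$ correction into each $\mathcal{T}_p$ (still separable in $x_p$ and $k$, hence rank one), whereas you absorb the $x_p$-independent part into a modified base $\Phi^{\star}$ so that each correction is literally $x_p$ times an outer product.
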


\begin{proof} Let us break down the augmented potential as  $\Phi' := \Phi + \mathcal{T}_1 + \dots \mathcal{T}_P$, where each $\mathcal{T}_p$ is a function defined as:
\begin{equation}
\mathcal{T}_p(\pmb{x}, y, \boldsymbol{y}_{\Pi}) :=
\begin{cases}\frac{1 - x_p}{1 - \Phi(y_p, \boldsymbol{y}_{\Pi})} \Phi(k, \boldsymbol{y}_{\Pi}) - \Phi(k, \boldsymbol{y}_{\Pi}) & \mbox{ if } \boldsymbol{y}_{\Pi} = \boldsymbol{y}_p^{\mathbb{U}} \\
    0 & \mbox{otherwise.}
\end{cases}
\label{eq:tp}
\end{equation}

Essentially, Eq.~(\ref{eq:tp}) replaces the original distribution $\Phi(Y = k)$ for $k = 1, \dots, |Y|$ by subtracting it and adding its proportionally covaried version instead; in other words, it ensures the new distribution still sums to 1 for all values of $x_p$ in $[0, 1]$.

Next, let us see that $\mathcal{T}_p$ has TT rank 1 for any $p = 1, \dots, P$. Its non-zero terms equal $\left( \frac{1 - x_p}{1 - \Phi(y_p, \boldsymbol{y}_{\Pi})} - 1 \right) \Phi(k, \boldsymbol{y}_{\Pi})$, which is a separable function in terms of $x_p$ and $k$. Therefore, $\mathcal{T}_p$ is a zero tensor everywhere except for one slice which contains a rank-1 matrix. Adding dummy dimensions (\emph{unsqueezing}) a tensor does not increase its TT rank, hence the result must have TT rank equal to 1.

Last, to obtain the stated bound, note that the TT rank of a tensor $\Phi$ is defined as the rank of its \emph{unfolding matrices}~\citep{Oseledets:11}. Such matrices have shape at most $\sqrt{|\Phi|} \times \sqrt{|\Phi|}$, hence their rank is bounded by $\sqrt{|\Phi|}$. Using the property that the rank of a summation of TT tensors is at most the sum of their individual ranks, the final TT rank of $\Phi' = \Phi + \sum_p \mathcal{T}_p$ is at most $\sqrt{|\Phi|} + P$.

\end{proof}

Since the proof of Lemma~\ref{lemma:rank_bound} is constructive, we turn it into a practical algorithm to assemble $\Phi'$. To do so, we discretize each $x_p$ into $I$ bins equally spaced in the interval $[0, 1]$. See Alg.~\ref{alg:main} for details.

\begin{algorithm}
    \small
	\caption{Given a conditional probability table of a node $Y$ with $N$ parents, which we view as an $(N+1)$-dimensional tensor, and a list of $P$ uncertain entries, build a $(P+N+1)$-dimensional TT. Operations starting with ``\texttt{tt\_}'' are performed in the TT compressed domain.}
	\label{alg:main}
	\begin{algorithmic}
		\STATE{\textbf{Input}: CPT $\Phi$, indices of the uncertain entries: $y_1, \dots, y_p$ for the child states and $\pmb{y}_1^{\mathbb{U}}, \dots, \pmb{y}_P^{\mathbb{U}}$ for its parents}
  		\STATE{\textbf{Output}: tensor train approximation of the enriched potential $\Phi'$}
        \FOR{$p = 1, \dots, P$}
                \STATE $\blacktriangleright$ $\mathcal{M} \leftarrow \texttt{zeros}(I, |Y_p|)$
                \FOR{$i = 1, \dots, I$}
                    \STATE $\blacktriangleright$ $x_p = (i-1)/(I-1)$ // Make $x_p$ move in $[0, 1]$ at regular intervals
                    \FOR{$j = 1, \dots, |Y_p|$}
                        \STATE $\blacktriangleright$ $\mathcal{M}(i, j) \leftarrow \left( \frac{1 - x_p}{1 - \Phi(y_p, \boldsymbol{y}_{\Pi})} - 1 \right) \Phi(k, \boldsymbol{y}_{\Pi})$
                    \ENDFOR
                \ENDFOR
                \STATE $\blacktriangleright$ $\mathcal{M} \leftarrow \texttt{tt\_compress}(\mathcal{M})$
                \STATE $\blacktriangleright$ $\mathcal{M} \leftarrow \texttt{tt\_unsqueeze}(\mathcal{M}, \{1, \dots, \hat{p}, \dots P\})$ // $\mathcal{M}$ gets dimensionality $P + 1$
                \STATE $\blacktriangleright$ $\mathcal{M} \leftarrow \texttt{tt\_tile}(\mathcal{M}, \{I\}_{i=1}^{p-1} \cup \{1\} \cup \{I\}_{i=p+1}^{P} \cup \{1\})$ // $\mathcal{M}$ gets size $I^P \times |Y_p|$
                \STATE $\blacktriangleright$ $\mathcal{T}_p \leftarrow \texttt{tt\_zeros}(\{I\}_{i=1}^P, |Y|, \{|Y| \mbox{ for } Y \in Y_{\Pi}\})$
                \STATE $\blacktriangleright$ $\mathcal{T}_p(..., \boldsymbol{y}_p^{\mathbb{U}}) \leftarrow \mathcal{M}$
        \ENDFOR
        \STATE $\blacktriangleright$ $\mathcal{T} \leftarrow \texttt{tt\_compress}(\Phi)$
		\RETURN $\texttt{tt\_sum}(\mathcal{T}, \mathcal{T}_1, \dots, \mathcal{T}_P)$
	\end{algorithmic}
\end{algorithm}

See Fig.~\ref{fig:bn_mrf_tn} for an illustration of the procedure using tensor network diagrams.
\begin{figure}\centering
	\begin{subfigure}[b]{0.3\columnwidth}
		\includegraphics[width=1\columnwidth]{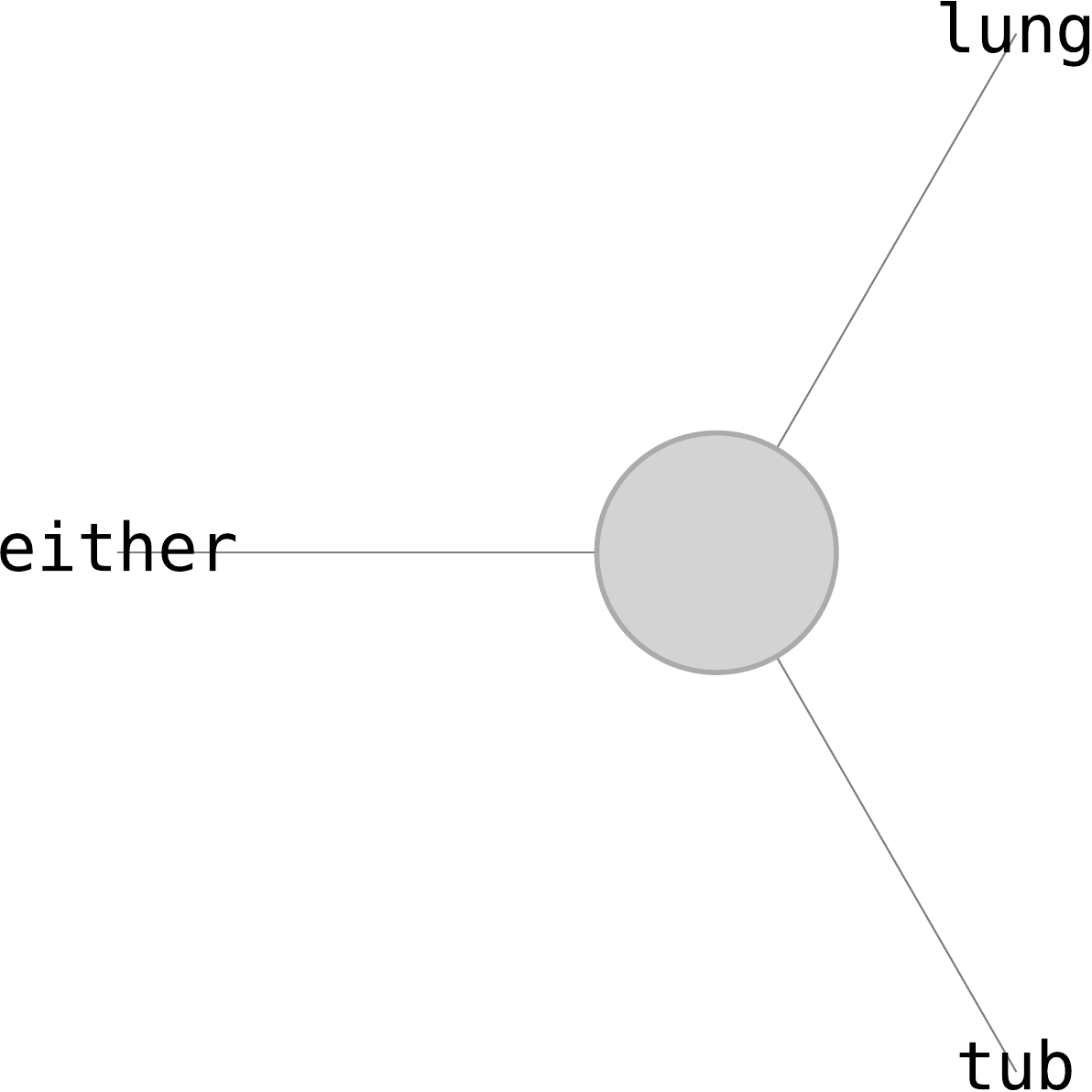}
		\caption{The CPT $\Phi$ for variable \emph{either} and its two parents, expressed as a tensor (node) in a TN.}
	\end{subfigure}
    \hfil
	\begin{subfigure}[b]{0.3\columnwidth}
		\includegraphics[width=1\columnwidth]{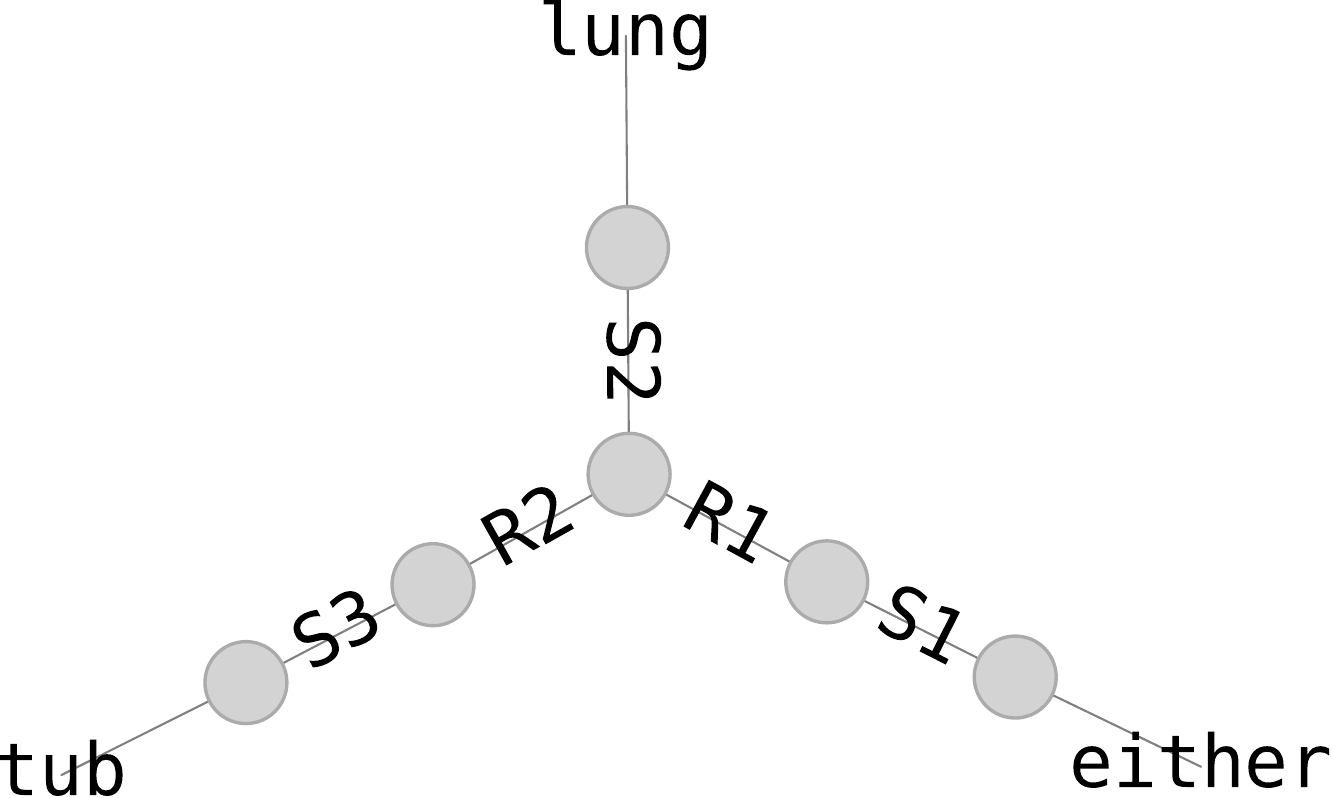}
		\caption{TT representation of $\Phi$, without any uncertainties.}
	\end{subfigure}
	\\
	\begin{subfigure}[b]{0.6\columnwidth}
		\includegraphics[width=1\columnwidth]{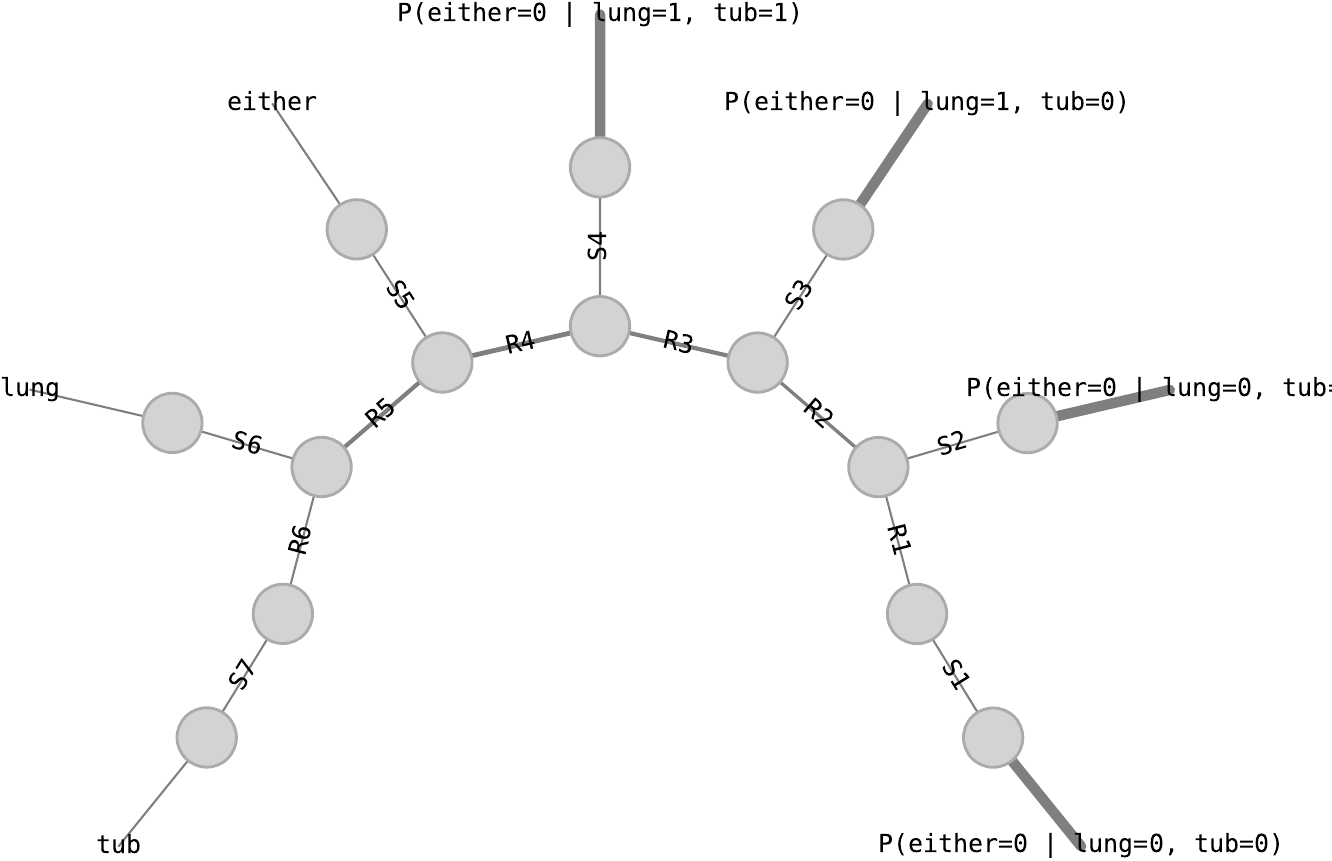}
		\caption{TT representation of $\Phi$ after regarding four of its entries as uncertain and encoding them as additional dimensions (edges in bold).}
	\end{subfigure}
	\caption{Encoding four uncertainties into a 3D CPT results in a new potential of dimensionality $3 + 4 = 7$, which we compactly express using the TT format. The new indices $R_k$ and $S_k$ are \emph{virtual} (in a probabilistic interpretation, they can be thought of as \emph{latent variables}) and only serve to factorize the original tensor into smaller-dimensional tensors, thus avoiding an exponential increase in the number of parameters.}
	\label{fig:bn_mrf_tn}
\end{figure}

\subsection{Computation of the Sobol Indices}

The manipulations described above turn the original TN into a new one with different potentials and topology.
We view the new TN as a convenient factorization of a high-dimensional function, namely the one mapping each possible value of the new variables (uncertain CPT entries) to the target probability of interest. We are now ready to obtain the Sobol indices for these variables. We do so by applying Ballester-Ripoll and Leonelli's algorithm~\citep{ballester2022computing}, which calculates the required variances by further transforming the TN, followed by performing ordinary inference operations on the result. We use variable elimination as our inference method, which makes the Sobol estimation exact, but other inference algorithms could be used instead. 

See Fig.~\ref{fig:asia_modified} for an illustration of the \textsc{asia} network after encoding 10 uncertain parameters.

\begin{figure}\centering
	\begin{subfigure}[b]{0.48\columnwidth}
		\includegraphics[width=1\columnwidth]{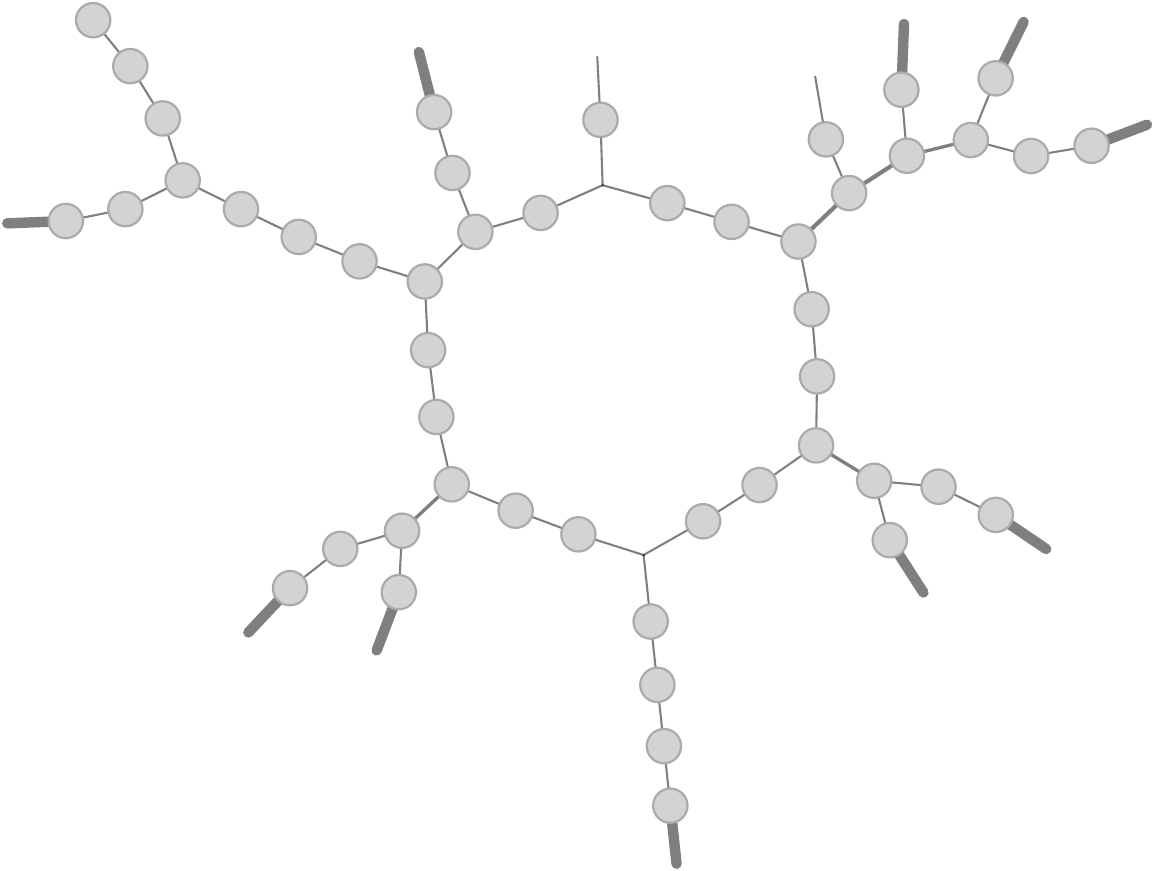}
		\caption{}
        \label{fig:asia_modified}
	\end{subfigure}
    \hfil
	\begin{subfigure}[b]{0.48\columnwidth}
		\includegraphics[width=1\columnwidth]{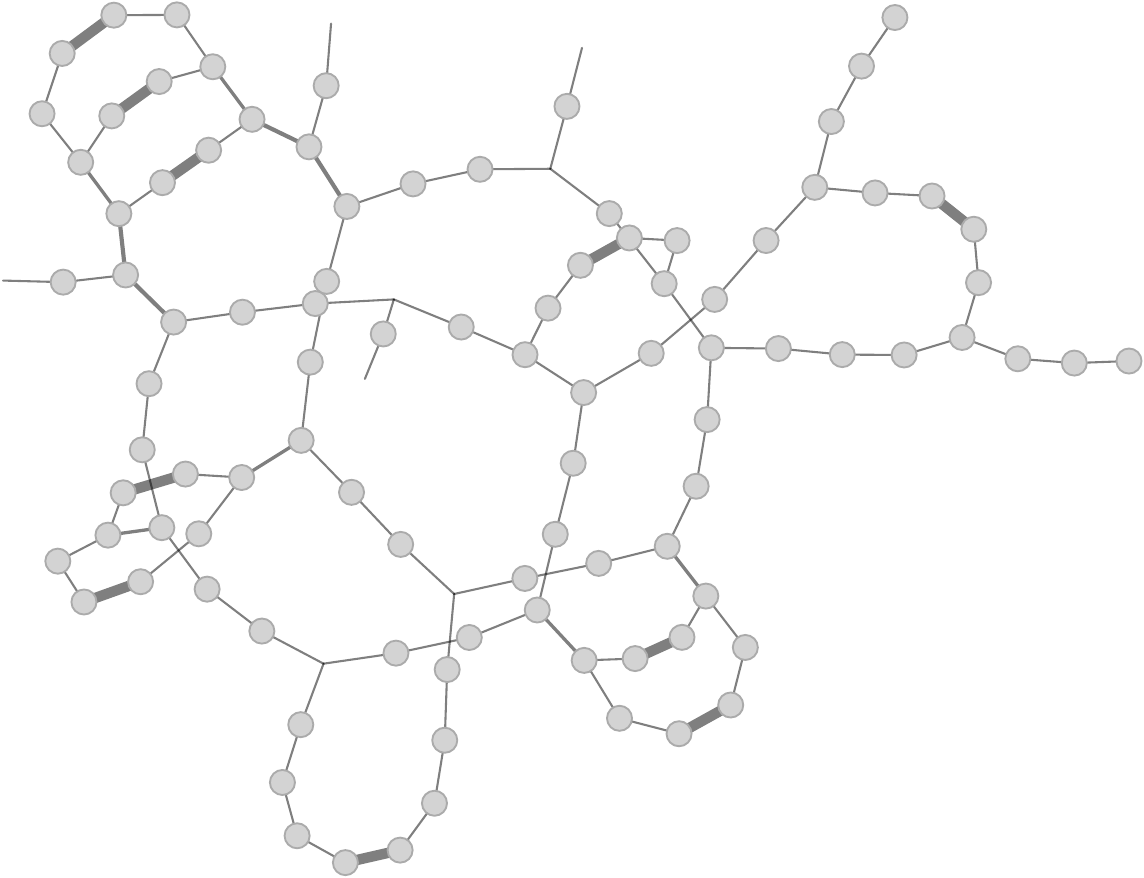}
		\caption{}
        \label{fig:asia_modified_product}
	\end{subfigure}
    \caption{Left: \textsc{asia} network after including 10 uncertain CPT entries as new dimensions (edges in bold). Right: squared network, used to compute the network's variance w.r.t. the 10 new dimensions in the method of Sobol~\cite{ballester2022computing}, built by glueing two copies of the network along the target indices. Index names in both diagrams were omitted to avoid visual cluttering.}
	\label{fig:asia_full}
\end{figure}


\section{Experimental Results}
\label{sec:experiments}

\subsection{Setup}
\label{sec:setup}

Since the proposed method does not support multiple uncertainties for same-parent configurations (Sec.~\ref{sec:encoding_uncertainties}), we cannot consider all BN parameters as uncertain. Instead, in our experiments we use a heuristic based on the sensitivity value (Sec.~\ref{sec:parametric_sensitivity_analysis}) to select potentially sensitive uncertain parameters among $\boldsymbol{\theta}$:

\begin{enumerate}
\item Select a target probability of interest $P(Y_O = y_O)$.
\item Compute the sensitivity value of all parameters of the BN using \citet{ballester2022yodo}.
\item For each CPT of the BN, select the rows where all sensitivity values are non-zero.
\item For each of those rows, mark as uncertain the parameter with the highest sensitivity value.
\item Assume the the uncertain parameters are independently distributed, with each $\theta_i$ following a beta distribution with mean equal to its original value $\theta_i^0$ and fixed variance $\sigma^2$. More explicitly, $\theta_i \sim \text{Beta}(\alpha = \theta_i^0 \lambda, \beta = (1 - \theta_i^0) \lambda)$ where $\lambda := \frac{\theta_i^0 - \sigma^2 - (\theta_i^0)^2}{\sigma^2}$. We will use $\sigma^2 = 0.02$ in our experiments.
\end{enumerate}

Note that, instead of heuristically, the selection of uncertainties and their distributions could also be expert-elicited.

\subsection{Software}

Ultimately, the proposed method relies on efficient inference of large TNs to produce the desired indices. We do this via variable elimination, also known as \emph{tensor contraction} in the TN literature. The order in which variables are eliminated has a massive impact on the overall computational burden, and finding the optimal order is an NP-hard problem. We use the TN contraction library \emph{cotengra}~\citep{graykourtis21} and its heuristic \textsc{auto-hq}, which offers a very good compromise between time spent on order planning and on actual evaluation. For certain tensor manipulation operations we use \emph{quimb}~\citep{gray2018quimb}, and we use \emph{PyTorch} and \emph{YODO} for calculating sensitivity values \citep{ballester2023yodo}.



\subsection{Network}

We next illustrate our algorithms over a BN learned from the survey ``Eurobarometer 93.1: standard Eurobarometer and COVID19 Pandemic" \citep{eu}. Out of the many questions asked in the survey, we selected some demographic information of the respondents together with their opinion about how the COVID-19 emergency was handled by local authorities and its consequences in the long term.  To avoid large number of levels and rare answers, possible categories were combined into meaningful groups. Table \ref{tab} gives details about the final 15 variables considered. Entries with missing values were dropped giving a total of 30985 observations.

\begin{table}
    \centering
    \label{lit1}
    \scalebox{0.53}{
\begin{tabular}{p{5cm}p{11.5cm}p{9.8cm}}
    \toprule
\textbf{Label} & \textbf{Question} & \textbf{Levels} \\
\midrule 
AGE &
How old are you? & 18-30/30-50/51-70/70+
  \\
CLASS & Do you see yourself and your household belonging to\dots? & Working class/Lower class/Middle class/Upper class \\
COMMUNITY & Would you say you live in a\dots & Rural area or village/Small or middle sized town/Large town\\
COPING (CP)& Thinking about the measures taken to fight the Coronavirus outbreak, in particular the confinement measures, would you say that it was an experience\dots? & Easy to cope with/Both easy and difficult to cope with/Difficult to cope with\\
COUNTRYFIN (CF)& The Coronavirus outbreak will have serious economic consequences for your country& Agree/Disagree/Don't know \\
GENDER & What is your sex?& Male/Female \\
HEALTH (HT)& Thinking about the measures taken by the public authorities in your country to fight the Coronavirus and its effects, would you say that they\dots & Focus too much on health/Focus too much on economivcs/Are balanced\\
INFO &Which of the following was your primary source of information during the Coronavirus outbreak? & Television/Written press/Radio/Websites/Social networks\\
JUSTIFIED (JD) &Thinking about the measures taken by the public authorities in your country to fight the Coronavirus and its effects, would you say that they were justfied? & Yes/No\\
LIFESAT (LS)& On the whole, are you satisfied with the life you lead?& No/Yes\\
OCCUPATION & Are you currently working?& Yes/No \\
PERSONALFIN (PF)&The Coronavirus outbreak will have serious economic consequences for you personally & Agree/Disagree/Don't know\\
POLITICS &In political matters people talk of 'the left' and 'the right'. How would you place your views on this scale? & Left/Centre/Right/Don't know\\
SATMEAS (SM)& In general, are you satisfied  with the measures taken to fight the Coronavirus outbreak by your government?& Yes/No \\
TRUST & Do you trust or not the people in your country?& Yes/No\\
\bottomrule
\end{tabular}}
\caption{Details of the variables used to learn the \textsc{covid-measures} BN. \label{tab}}
\end{table}

A BN over this dataset was learned, which we henceforth refer to as \textsc{covid-measures} model, with the tabu algorithm of the \texttt{bnlearn} R package in combination with a non-parametric bootstrap of the data \citep{Scutari2010}. Two-hundred replications of the data were created and for each replication a BN was learned. Edges that appeared in more than 54.5\% of the learned networks were retained in the final model \citep[this percentage was chosen using the method of][]{scutari2013identifying}. The resulting network has 15 nodes, 34 edges
 and 588 parameters (see Fig.~\ref{fig:covid_measures}).

\begin{figure}
    \centering
    \includegraphics[scale=0.24]{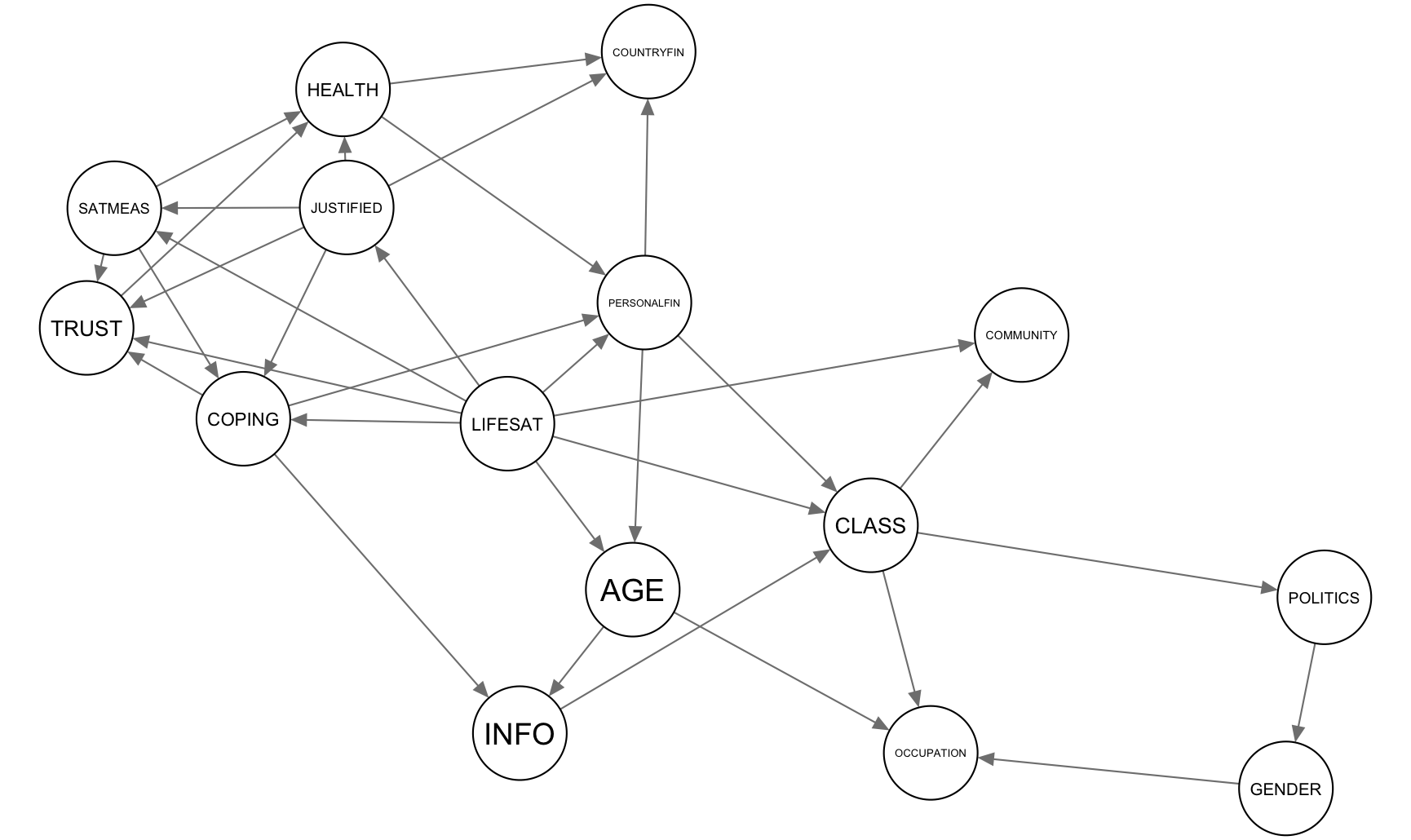}
    \caption{BN learned on the COVID-19 pandemic Eurobarometer dataset.}
    \label{fig:covid_measures}
\end{figure}

\subsection{Analysis}

We target first the probability of interest $P(\text{COUNTRYFIN} = \text{Agree})$ in the \textsc{covid-measures} model, and apply the proposed method to identify the most influential parameters of the BN. After restricting ourselves to the conditions detailed in Sec.~\ref{sec:setup}, we are left with $|\boldsymbol{\theta}| = 55$ uncertainties. Fig.~\ref{fig:covid_measures_modified} shows the resulting tensor network after baking $\boldsymbol{\theta}$ in.
\begin{figure}
    \centering
    \includegraphics[scale=0.6]{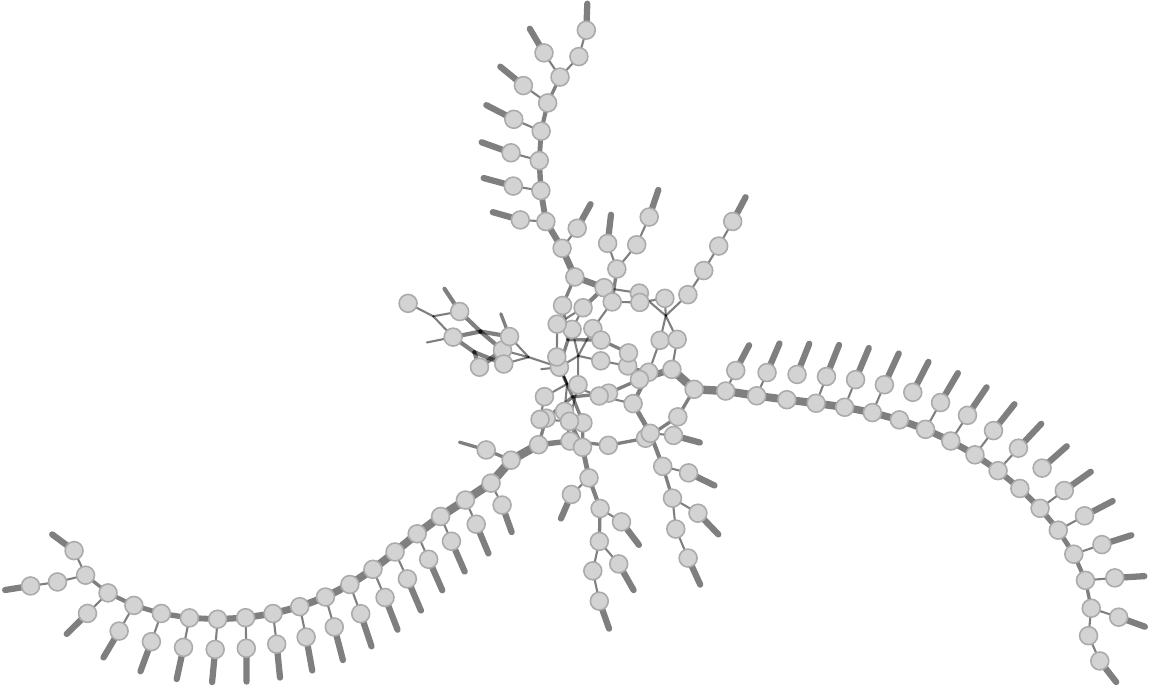}
    \caption{Tensor network for the \textsc{covid-measures} model after encoding 55 uncertain parameters. This visualization uses the Kamada-Kawai graph layout.}
    \label{fig:covid_measures_modified}
\end{figure}

The uncertainties $\boldsymbol{\theta}$ we analyzed include up to 18 entries per individual CPT, such as that of variable PERSONALFIN. After encoding, the resulting node has $3 + 18 = 21$ parents and its enriched CPT would take millions of parameters at any reasonable discretization level. However, thanks to the low-rank factorization we propose, the output tensor network from Fig.~\ref{fig:covid_measures_modified} only has 12'783 parameters in total, comprising tensors up to 6 dimensions and  1'026 elements only.

Fig.~\ref{fig:country_finance_suffer} shows the resulting top 40 BN parameters, sorted by descending total index.
\begin{figure}
    \centering
    \includegraphics[scale=0.6]{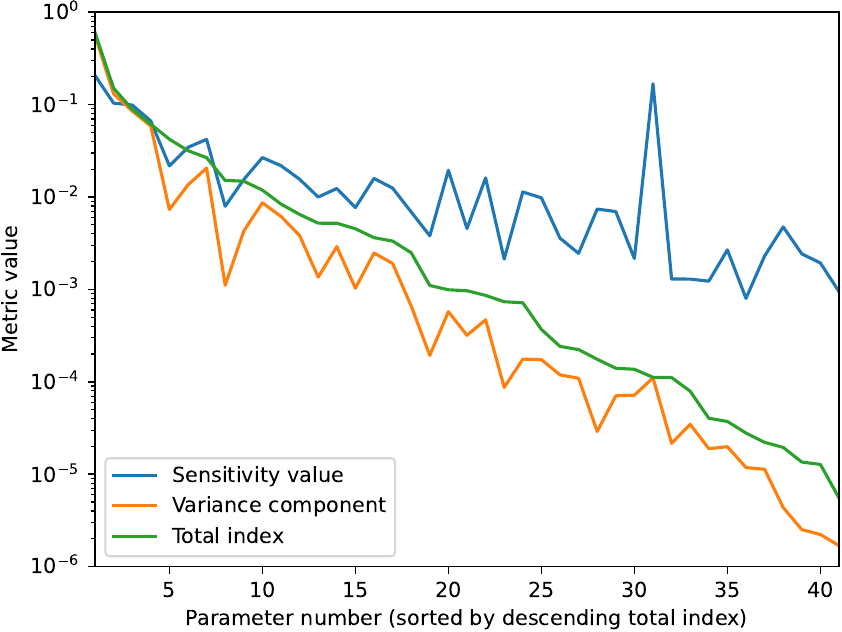}
    \caption{Sensitivity measures for the \textsc{covid-measures} model and target $P(\text{COUNTRYFIN} = \text{Agree})$.}
    \label{fig:country_finance_suffer}
\end{figure}

There are some significant differences between the OAT analysis (the sensitivity value, in blue) and the global approach we propose. For example, the 31$^{st}$ parameter\footnote{Corresponding to $P(\text{COUNTRYFIN} = \text{Agree} | \text{HEALTH} = \text{Are balanced}, \\ \text{JUSTIFIED} = \text{No}, \text{PERSONALFIN} = \text{Agree})$.} has a sensitivity value of $|\partial f / \partial \theta_{31}| = 0.168$ (the second highest), but its Sobol indices are almost insignificant at roughly $10^{-4}$. Conversely, the Sobol indices of the top parameter\footnote{Corresponding to $P(\text{COUNTRYFIN} = \text{Agree} | \text{HEALTH} = \text{Are balanced}, \\ \text{JUSTIFIED} = \text{No}, \text{PERSONALFIN} = \text{Disagree})$.} are three times larger than its sensitivity value $|\partial f / \partial \theta_1| = 0.207$. Looking at the big picture, the Sobol indices drop much faster than the sensitivity values, suggesting that the OAT analysis overestimates the importance of most BN entries. The sensitivity values have Spearman correlations of only $0.507$ and $0.461$ with the variance components and total indices, respectively.

The proposed analysis also sheds light on the \emph{interaction} between uncertain CPT entries. For each parameter $\theta_i$, the difference between the green and orange lines is equal to the interaction term $S_{\theta_i}^T - S_{\theta_i}$, which denotes how much of the parameter's influence is due to joint effects with one or more other uncertain parameters. For example, in the case of the 9$^{\text{th}}$ parameter in Fig.~\ref{fig:country_finance_suffer}~\footnote{Corresponding to $P(\text{COUNTRYFIN} = \text{Don't know } | \text{ HEALTH} = \text{Focus too much on health}, \text{ JUSTIFIED} = \text{No}, \text{ PERSONALFIN} = \text{Don't know})$.}, we have that $S_{\theta_9}^T$ is almost 14 times larger than $S_{\theta_9}$.

See Tab.~\ref{tab:country_finance_suffer} for the full analysis on the top 20 uncertain parameters.
\begin{table}
\renewcommand{\arraystretch}{0.8}
\footnotesize
\begin{center}
\begin{tabular}{ccccc}
\toprule
CPT entry & \specialcell{Original \\ value \\ $\theta_i^0$} & \specialcell{Sensitivity \\ value \\ $|\partial f / \partial \theta_i|$} & \specialcell{Variance \\ component \\ $S_{\theta_i}$} & \specialcell{$\downarrow$ Total \\ index \\ $S_{\theta_i}^T$} \\
\midrule
P(CP=2 \textbar LS=1, SM=0, JD=0) & 0.396000 & 0.001295 & 0.562670 & 0.600831 \\
P(CF=0 \textbar HT=0, JD=0, PF=1) & 0.890525 & 0.103779 & 0.130846 & 0.150597 \\
P(PF=2 \textbar LS=0, HT=1, CP=2) & 0.026115 & 0.009978 & 0.085278 & 0.089189 \\
P(CP=2 \textbar LS=0, SM=1, JD=0) & 0.285907 & 0.002455 & 0.058607 & 0.061405 \\
P(PF=2 \textbar LS=1, HT=2, CP=2) & 0.025797 & 0.006555 & 0.007327 & 0.042235 \\
P(PF=2 \textbar LS=0, HT=0, CP=1) & 0.034884 & 0.009801 & 0.013552 & 0.031758 \\
P(CF=0 \textbar HT=2, JD=0, PF=2) & 0.728324 & 0.011346 & 0.020582 & 0.026658 \\
P(CF=2 \textbar HT=0, JD=1, PF=2) & 0.157303 & 0.002429 & 0.001110 & 0.015136 \\
P(PF=2 \textbar LS=1, HT=2, CP=0) & 0.021277 & 0.003172 & 0.004286 & 0.014840 \\
P(HT=0 \textbar SM=1, JD=0) & 0.335113 & 0.007694 & 0.008659 & 0.011876 \\
P(CF=0 \textbar HT=1, JD=1, PF=1) & 0.644961 & 0.021831 & 0.006169 & 0.008394 \\
P(PF=2 \textbar LS=1, HT=0, CP=1) & 0.035593 & 0.002300 & 0.003833 & 0.006482 \\
P(CF=0 \textbar HT=1, JD=0, PF=1) & 0.845084 & 0.066525 & 0.001359 & 0.005212 \\
P(CF=0 \textbar HT=2, JD=1, PF=2) & 0.633333 & 0.000948 & 0.002905 & 0.005207 \\
P(JD=0 \textbar LS=1, SM=1) & 0.523494 & 0.003805 & 0.001030 & 0.004533 \\
P(CF=0 \textbar HT=2, JD=1, PF=0) & 0.919786 & 0.015900 & 0.002478 & 0.003634 \\
P(CF=1 \textbar HT=2, JD=1, PF=1) & 0.229560 & 0.012489 & 0.001911 & 0.003337 \\
P(CF=0 \textbar HT=0, JD=0, PF=2) & 0.816038 & 0.007410 & 0.000662 & 0.002495 \\
P(HT=0 \textbar SM=1, JD=1) & 0.530692 & 0.015577 & 0.000193 & 0.001103 \\
P(CF=0 \textbar HT=1, JD=0, PF=2) & 0.712500 & 0.004749 & 0.000574 & 0.000991 \\
\bottomrule
\end{tabular}
\end{center}
\caption{Top 20 measures for the \textsc{covid-measures} model and target $f = P(\text{COUNTRYFIN} = \text{Agree})$. 
}
\label{tab:country_finance_suffer}
\end{table}

A second example with the same model, this time for target probability $P(\text{INFO} = \text{Radio})$, is shown in Fig.~\ref{fig:country_finance_info}. The 8$^{\text{th}}$ parameter, $P(\text{INFO} = \text{Radio} | \text{Age} = 18-30)$, has a sensitivity value of $0.161$, apparently indicating that the effectiveness of communicating COVID-19 lockdown measures via radio channels is largely driven by the young segment. However, its Sobol indices are both below $0.0015$, revealing the parameter to be virtually irrelevant.
\begin{figure}
    \centering
    \includegraphics[scale=0.6]{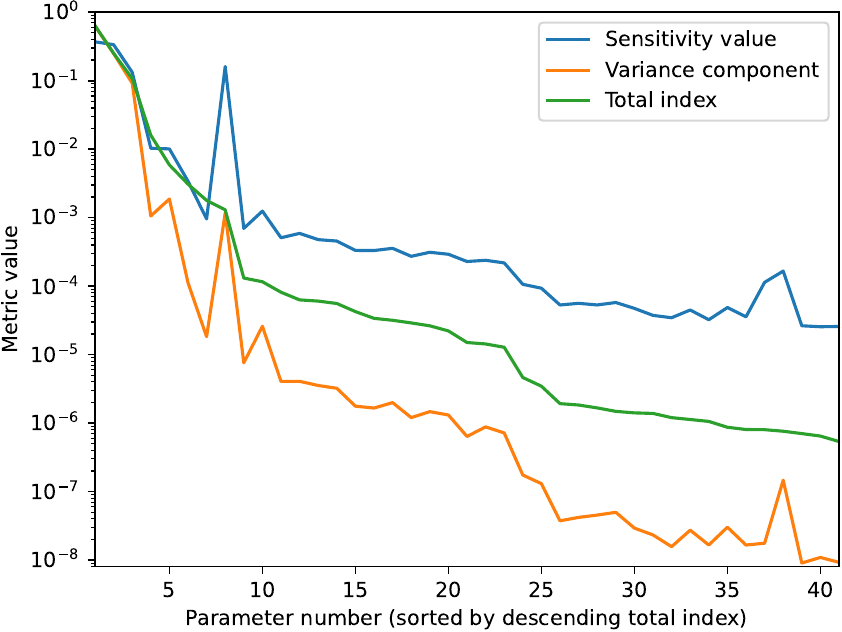}
    \caption{Sensitivity measures for the \textsc{covid-measures} model and target $P(\text{INFO} = \text{Radio})$. Note the blue spike denoting a large sensitivity value, whereas the Sobol indices are below 0.0015, confirming that the parameter is not relevant.}
    \label{fig:country_finance_info}
\end{figure}

\subsection{Comparison Across Networks}

Last, we conduct a comparative study that applies the proposed method to the BN above as well as 7 other networks from the literature. Results are reported in Tab.~\ref{tab:bn_comparison} and include metrics about the network before and after encoding uncertain parameters, the resulting Sobol indices, and computation timings.

\begin{table}
    \small
    \scalebox{0.8}{
    \begin{tabular}{lllllllll}
\toprule
 & alarm & crisis & child & \specialcell{covid- \\ measures} & hailfinder & hepar2 & insurance & sachs \\
\midrule
N. variables & $37$ & $21$ & $20$ & $15$ & $56$ & $70$ & $27$ & $11$ \\
Treewidth & $5$ & $6$ & $7$ & $8$ & $16$ & $17$ & $9$ & $6$ \\
N. parameters & $752$ & $183$ & $344$ & $588$ & $3741$ & $2139$ & $1419$ & $267$ \\
N. uncertainties & $7.2$ & $17.6$ & $25.0$ & $76.2$ & $8.2$ & $27.6$ & $32.2$ & $16.2$ \\
N. augmented parameters & $1622.2$ & $974.4$ & $3156.4$ & $1.25 \cdot 10^{5}$ & $4662.2$ & $4318.2$ & $7480.4$ & $2296.4$ \\
Max. $S^T - S$ & $0.022$ & $0.022$ & $0.02$ & $0.064$ & $0.084$ & $0.15$ & $0.018$ & $0.032$ \\
Max. $|S^T - f'|$ & $0.24$ & $0.27$ & $0.26$ & $0.21$ & $0.19$ & $0.45$ & $0.15$ & $0.13$ \\
$t_{\text{encoding}}$ (s) & $0.11$ & $0.2$ & $0.59$ & $5.69$ & $0.11$ & $0.51$ & $0.72$ & $0.29$ \\
$t_{\text{Sobol}}$ (s) & $0.29$ & $0.38$ & $0.8$ & $7.67$ & $0.24$ & $2.58$ & $1.99$ & $0.44$ \\
$t_{\text{total}}$ (s) & $0.39$ & $0.58$ & $1.39$ & $13.36$ & $0.35$ & $3.1$ & $2.71$ & $0.74$ \\
\bottomrule
\end{tabular}

    }
    \centering
    \caption{Comparison across multiple BNs, including the newly learned \textsc{covid-measures}. Each columns was generated as the average of 5 independent runs: at each run, the target variables $Y_O$ and its target level $y_o$ were chosen at random. The network \textsc{crisis} is the BN proposed in~\cite{ballester2022yodo}; the remaining ones come from the \texttt{bnlearn} repository~\citet{Scutari2010}.  \label{tab:bn_comparison}}
\end{table}

Tab.~\ref{tab:bn_comparison} shows that interaction terms can get as large as $S^T - S \approx 0.15$ in the examples tested, while the difference between a total index and its corresponding sensitivity value went as high as $|S^T - f'| \approx 0.45$. The table also gives insights on the typical computing times required to apply the proposed method to compute all first-order Sobol indices. It can be noticed that the encoding requires less time than computing the actual Sobol indices. In all cases the total time increases with the number of augmented parameters, which of course depend on the number of variables and the treewidth. We can see that in all cases the Sobol indices can be computed in a short amount of time, less than a second in 50\% of the BNs considered.

\section{Conclusions}
\label{sec:conclusions}

Our work identifies the limitations of OAT-based sensitivity analysis in BNs and proposes a global, variance-based alternative that can account for multiple uncertain CPT parameters varying at once. The algorithm is based on embedding the uncertain probabilities as additional parents of the affected CPTs, and compactly encoding these augmented tables via low-rank factorization. The factorization of each CPT can be expressed as a TN, which means the resulting network is also a TN and is therefore amenable to tensor contraction (equivalent to variable elimination). Thanks to this TN formulation, we were able to apply a Sobol index computation algorithm~\citep{ballester2022computing} that relies on element-wise product between TNs, followed by exact tensor contraction. We demonstrate the method with a BN fitted on a real-world survey; we find cases where either the classical sensitivity value fails to account for interactions between parameters, or conversely it vastly overestimates the influence of individual parameters. The algorithm runs in a few seconds at most, even in networks with dozens of nodes and thousands of parameters. All in all, we argue the method allows analysts to conduct a broader and more informed sensitivity analysis on BNs subject to parameter uncertainty.

Despite the above, the proposed method is not free of limitations. The clearest one is its inability to model uncertainties for multiple child state probabilities given the same configuration of parents. This is because the sum-to-1 constraint introduces dependencies between those CPT entries, and the Sobol method is only suitable for non-correlated parameters of interest. However, note that this is not a problem for binary variables, since there the probabilities for alternate child states are fully linked to each other and therefore will lead to identical sensitivity indices.

\subsection*{Future Work}

We have restricted ourselves to an exact tensor contraction scheme, as it was sufficient to compute Sobol indices from a TN that, after augmentation, comprised 50-100 nodes. For larger or topologically more complex networks, note that the TN framework opens the door to \emph{approximate} contraction, which leverages SVD rank truncation to compress intermediate factors on the fly during inference. In exchange for a modest error, approximate schemes are better suited to tackle inference in challenging TNs~\citep{graychan24}.

As another venue of future research, upon learning which uncertain entries are the most influential in a global SA sense, one could then further obtain higher-order Sobol indices and derivations thereof in order to discern which interactions between entries are important. Options include the \emph{closed} indices, the \emph{lower} and \emph{upper} indices, or the \emph{Shapley values}~\citep{owen:14}, among other metrics.

\bibliographystyle{elsarticle-harv} 
\bibliography{references}

\end{document}